\pgfplotsset{compat=newest} 
\newlength\figW
\newcommandx{\unsure}[2][1=]{\todo[linecolor=red,backgroundcolor=red!25,bordercolor=red,#1]{#2}}
\newcommandx{\change}[2][1=]{\todo[linecolor=blue,backgroundcolor=blue!25,bordercolor=blue,#1]{#2}}
\newcommandx{\info}[2][1=]{\todo[linecolor=OliveGreen,backgroundcolor=OliveGreen!25,bordercolor=OliveGreen,#1]{#2}}
\newcommandx{\improvement}[2][1=]{\todo[linecolor=Plum,backgroundcolor=Plum!25,bordercolor=Plum,#1]{#2}}
\newcommandx{\thiswillnotshow}[2][1=]{\todo[disable,#1]{#2}}
\title{\LARGE \bf
LMB Filter Based Tracking Allowing for Multiple Hypotheses in Object Reference Point Association*
}
\author{Martin Herrmann$^{1}$, Aldi Piroli$^{1}$, Jan Strohbeck$^{1}$, Johannes M\"uller$^{1}$ and Michael Buchholz$^{1}$
\thanks{*Part of this work was financially supported by the Federal Ministry of Economic Affairs and Energy of Germany within the program "Highly and Fully Automated Driving in Demanding Driving Situations" (project MEC-View, grant number 19A16010I).}
\thanks{*Part of this work has been conducted as part of ICT4CART project which has received funding from the European Union's Horizon 2020 research \& innovation programme under grant agreement No. 768953. Content reflects only the authors' view and European Commission is not responsible for any use that may be made of the information it contains.
}
\thanks{$^{1}$The authors are with Institute of Measurement, Control, and Microtechnology, Ulm University, Germany
{\tt\footnotesize {\{firstname.lastname\}}@uni-ulm.de} and {\tt\footnotesize {johannes-christian.mueller}@uni-ulm.de}}%
}
\newtheorem{proposition}{Proposition}[section]
\newcommand\copyrighttext{%
    \footnotesize Copyright $\copyright$ 2020 IEEE.
    Personal use of this material is permitted.
    Permission from IEEE must be obtained for all other uses, in any current or future media, including reprinting/republishing this material for advertising or promotional purposes, creating new collective works, for resale or redistribution to servers or lists, or reuse of any copyrighted component of this work in other works.}%
\newcommand\copyrightnotice{%
    \begin{tikzpicture}[remember picture,overlay]%
    \node[anchor=south,yshift=10pt] at (current page.south) {\fbox{\parbox{\dimexpr\textwidth-2cm}{\copyrighttext}}};%
    \end{tikzpicture}%
    \vspace{-10pt}%
}
\begin{document}

\maketitle
\copyrightnotice
\thispagestyle{empty}
\pagestyle{empty}


\begin{acronym}
\acro{LMB}[LMB]{Labeled Multi-Bernoulli}
\acro{V2V}[V2V]{Vehicle-to-Vehicle}
\acro{V2X}[V2X]{Vehicle-to-Anything}
\acro{MEC}[MEC]{Multi-access Edge Computing}
\acro{ETSI}[ETSI]{European Telecommunications Standards Institute}
\acro{UE}[UE]{User Equipment}
\acro{eNB}[eNB]{evolved Node B}
\acro{5G}[5G]{5th Generation}
\acro{CAM}[CAM]{Cooperative Awareness Message}
\acro{CPM}[CPM]{Collective Perception Message}
\acro{FOV}[FOV]{Field of View}
\acro{FISST}[FISST]{Finite Set Statistics}
\acro{RFS}[RFS]{Random Finite Set}
\acro{JIPDA}[JIPDA]{Joint Integrated Probabilistic Data Association}
\acro{MHT}[MHT]{Multi-Hypothesis Tracking}
\acro{PHD}[PHD]{Probability Hypothesis Density}
\acro{CPHD}[CPHD]{Cardinalized Probability Hypothesis Density}
\acro{CBMEMBER}[CB-MeMBer]{Cardinality Balanced Multi-Target Multi-Bernoulli}
\acro{GLMB}[GLMB]{Generalized Labeled Multi-Bernoulli}
\acro{UKF}[UKF]{Unscented Kalman Filter}
\acro{EKF}[EKF]{Extended Kalman Filter}
\acro{CTRV}[CTRV]{Constant Turn Rate and Velocity}
\acro{CTRA}[CTRA]{Constant Turn Rate and Acceleration}
\acro{GPS}[GPS]{Global Positioning System}
\acro{LTE}[LTE]{Long Term Evolution}
\acro{LTE-A}[LTE-A]{LTE-Advanced}
\acro{CI}[CI]{Covariance Intersection}
\acro{GCI}[GCI]{Generalized Covariance Intersection}
\acro{ITS}[ITS]{Intelligent Transportation System}
\acro{G5}[ITS-G5]{ITS-G5}
\acro{OSPAT}[OSPAT]{Optimal Sub-pattern Assignment metric for Track}
\acro{MSE}[MSE]{Mean Squared Error}
\acro{iid}[i.i.d.]{independent and identically distributed}
\acro{MTT}[MTT]{Multi Target Tracking}
\acro{MAX}[MAX]{Single Hypothesis Method with Maximum Likelihood Decision}
\acro{MH}[MH]{Multiple Hypothesis}
\acro{MEAS}[MEAS]{Single Hypothesis Method with exactly measured object reference point}
\acro{MHC}[MHC]{Multiple Hypothesis with Constraints Check}
\acro{AV}[AV]{Autonomous Vehicle}
\end{acronym}


\begin{abstract}
Autonomous vehicles need precise knowledge on dynamic objects in their surroundings. Especially in urban areas with many objects and possible occlusions, an infrastructure system based on a multi-sensor setup can provide the required environment model for the vehicles. Previously, we have published a concept of object reference points (e.g. the corners of an object), which allows for generic sensor "plug and play" interfaces and relatively cheap sensors. This paper describes a novel method to additionally incorporate multiple hypotheses for fusing the measurements of the object reference points using an extension to the previously presented Labeled Multi-Bernoulli (LMB) filter. In contrast to the previous work, this approach improves the tracking quality in the cases where the correct association of the measurement and the object reference point is unknown. Furthermore, this paper identifies options based on physical models to sort out inconsistent and unfeasible associations at an early stage in order to keep the method computationally tractable for real-time applications. The method is evaluated on simulations as well as on real scenarios. In comparison to comparable methods, the proposed approach shows a considerable performance increase, especially the number of non-continuous tracks is decreased significantly.
\end{abstract}


\section{INTRODUCTION} \label{introduction}
Accurate modeling of dynamic objects in the environment of \ac{AV} is an essential task for their reliable and safe operation. However, despite of many research carried out in that field, there are still many open questions, e.g. regarding the computational tractability, the level of detail objects are to be described, or the method of choice in tracking applications \cite{ADSSurvey2019}, especially in mass market scenarios where cheap sensors are used \cite{SurveyAutonomous2019}. The early stage of detection and tracking of dynamic objects, e.g. other vehicles, pedestrians or other road users, though, is of great importance, since errors propagate through the system and can lead to failures of later stages.

Besides an environment detection by sensors on-board an \ac{AV}, as described e.g. in \cite{sandy}, infrastructure sensors can be an additional source of information to assist the \ac{AV} especially in complex scenarios like urban intersections \cite{digital_mirror}. As realized in the project MEC-View~\cite{mecview}, this even allows for merging scenarios without a direct line of sight \cite{MuellerPlanner2019}.

Realizing such a functionality poses multiple requirements to the infrastructure system, like low latency or reliability \cite{MuellerIVSoTIF2019}, and the perception task itself. Besides the number and position of objects, also information on their type, their dynamic state and their extent state are crucial, e.g. to determine if a gap between two vehicles is suitable to merge into it. Fig.~\ref{fig:van_trailer} shows an example of the infrastructure based system of \cite{mecview}, which demonstrates the challenge of the estimation of the extent in occluded scenarios. Even in case of perfect detectors, it is impossible for the first camera to predict the existence of a trailer, which can be seen in the image of the second camera. Thus, camera 1 can not measure the length of the van or, in the worst case, delivers biased measurements. However, unbiased measurements are a basic requirement for \ac{MTT} approaches, whose defiance introduces additional errors. Using the \ac{MTT} filter proposed by us in \cite{env_model_rp}, such scenarios can be handled naturally by only using pure position measurements of the object reference points (corners). Performance issues in the special case of unknown association of such measurements and the object reference points are addressed in this paper and solved by the proposed extension.

\begin{figure}%
  \vspace{-3pt}%
  \subfloat[Frontal view of camera 1]{%
    \includegraphics[width=.5\columnwidth]{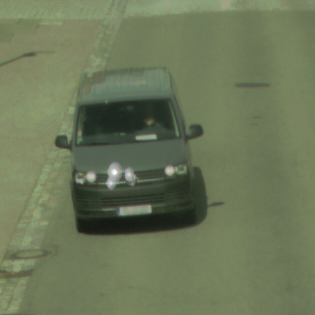}%
  }
  \subfloat[Side view of camera 2]{%
    \includegraphics[width=.5\columnwidth]{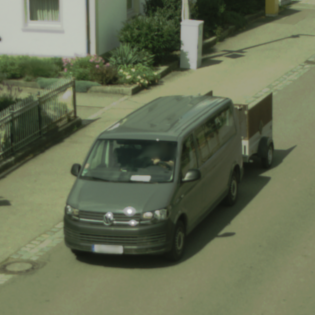}%
  }
  \caption{Challenging scenario due to occlusion of the trailer in camera 1.}
  \label{fig:van_trailer}
  \vspace*{-15pt}%
\end{figure}

\subsection{Related Work} \label{related_work}
There are multiple approximative methods available for tracking of multiple dynamic objects with multiple sensors, as \ac{MHT} \cite{mht_tracking}, \ac{PHD} \cite{Mahler2003}, or \ac{CBMEMBER} \cite{CBMEMBER09}. A rigorous extension of the single object Bayes filter \cite{Ho64} is known as the Multi-Object Bayes filter \cite{Mahler2007}, which, similar to the Kalman filter, has an analytical solution under specific circumstances, namely the \ac{GLMB} filter proposed by Vo and Vo \cite{Vo2013}. The \ac{LMB} filter \cite{Reuter2014a} is an approximation of the latter, which constitutes a good compromise between computational tractability and performance.

In the case of extended targets, where the extent of objects cannot be neglected, the extent itself or the parameters of a model describing the extent or shape of an object have to be estimated, too. \cite{extendedTargetTracking2017} gives an overview on extended object tracking and basically distinguishes between two different strategies. First, there are methods that estimate the shape of an object based on single measurements, which is tracked and refined over time. Often simple models, as rectangles or circles are used, but also more complex ones. Hereby, the object state is augmented by the model parameters which need to be estimated by the sensors. Second, methods known as extended target tracking are available, which differ from the latter by allowing an object to give rise to more than one measurement. The estimation of an object's shape from a single measurement, though, is often not possible. However, both strategies require high computational effort.

Our approach \cite{env_model_rp} combines ideas of both strategies. While the method sticks to the standard multi-object measurement model that prohibits multiple measurements per object, it does not require the estimation of the shape of an object in a single measurement. Nevertheless, the estimation of an object's extent over time is possible without any sensor being capable to do so on its own. This is based on the highly flexible concept that the extent of a rectangle can be estimated from the knowledge of the corner positions of an object. Thus, the problem of biased extent measurements is obsolete.

A method to consider multiple measurement hypotheses is proposed in \cite{MultipleHypothesis2018}, which is mathematically similar to our approach. While their approach treats the problem that typical preprocessing algorithms sometimes violate the requirements of the standard measurement model, our approach tackles the problem of unknown association of measurements to object reference points. Both approaches may be used together and complement each other.

\subsection{Contributions}
This paper bases on the previously published work \cite{env_model_rp} and has a twofold contribution. First, the performance weaknesses of the original method in situations of large noise and unknown measurement to object reference point association are discussed and second, a substantial improvement is presented, which overcomes those. It is shown that the use of the proposed method reduces the number of non-continuous trajectories tremendously, which is due to a probabilistic modeling of multiple reference point associations within the filter update. This prevents the filter from incorporation of erroneous information due to incorrect association, and allows for continuous trajectories even in case of ambiguous associations. The proposed extensions are mathematically proven and evaluated in a series of Monte Carlo simulations and a complex real scenario.

The present paper recapitulizes the fundamental idea of the previous \ac{LMB} filter in Section \ref{MOT_tracking}, which also elaborates the challenges of scenarios with ambiguous generic measurements. In Section \ref{mult_hyp} we present our novel extension to the mentioned filter that is able to cope with such situations. Finally, our paper discusses ideas to reduce the increased computational costs of the presented method in Section \ref{sec:validation_gate} and closes with an extensive evaluation on simulated and real data in Section \ref{evaluation}, as well as a conclusive review in Section \ref{conclusions}.


\section{Multi-Object Tracking with Object Reference Point Association} \label{MOT_tracking}
The proposed method builds upon the specific measurement model of \cite{env_model_rp}, which is referred to as \acf{MAX} in the following and summarized here shortly, especially emphasizing points which are important for the presented extensions.\footnote{Note that albeit the presented approach can easily be adapted to other multi-sensor multi-object filters, derivations base on the \ac{LMB} filter, hence notation follows \cite{Vo2013}. Further, note that the structure of $\underline{H}^{(\zeta)}(x)$ and definition of $\mathbb{A'}$, depend on the arrangement of the vectors $x$ and $z$, as well as on the dimension of the overall coordinate system. Adaptions to applications with distinct settings, e.g. a three dimensional coordinate system, are easily possible.}

\subsection{Object Model} \label{object_model}
All objects are described by a probabilistic box model, using multivariate Gaussian mixtures \cite{McLachlan2000} within a two-dimensional coordinate frame, based on the assumption of a flat world. Such object is completely described by the feature vector $o = \left[p(\hat{x}), \zeta, r, \ell, k \right]^T$, where the probability density function $p(\hat{x})$ describes a box's spatial state and $\zeta \in \mathbb{A}'$ is the object reference point, which describes the position at which the spatial distribution is originated. $\mathbb{A}'$ consists of $\mathbb{A}' = \{\mathsf{C}, \mathsf{FL}, \mathsf{FR}, \mathsf{BL}, \mathsf{BR}\}$, thus the geometrical center, front left, front right, back left and back right corner. Further, $r = p(\exists x)$ denotes the existence probability of an object, $\ell$ is a unique label and $k$ is a time index. The spatial distribution follows
\begin{equation}
p(x) = \sum_{j=1}^{J}w^{(j)}\mathcal{N}\left(x; \hat{x}^{(j)}, \underline{P}^{(j)}\right)\, ,
\end{equation}

where the weights sum up to one $\sum_{j=1}^{J}w^{(j)} = 1$ and $0 \leq w^{(j)} \leq 1$. Hereby, all mixture components are independent, the elements of the state vectors $x \in \mathbb{R}^8$ are $\hat{x} = [x^{(\zeta)}, y^{(\zeta)}, \varphi, \varphi', v, a, w, l]^T$, and $\underline{P}\in \mathbb{R}^{8\times8}$ is the respective covariance matrix. $x^{(\zeta)}$ and $y^{(\zeta)}$ describe the position of the object reference point, $\varphi$ is the yaw angle and $\varphi'$ the yaw rate, $v$ the absolute velocity, $a$ the absolute acceleration and $w$ and $l$ the width and length, respectively. Note that within the \ac{LMB} filter all objects are transformed into the center object reference point in order to ensure correct prediction.

\subsection{Measurements} \label{measurements}
Sensor measurements are modeled very similar to objects using a probabilistic box model, though using a multivariate Gaussian probability density function and allowing for \textit{incomplete} measurements. This means all features in a sensor measurement are optional except the position $[x^{(\zeta)}, y^{(\zeta)}]^T$ (where $\zeta$ may be unknown) and the time index $k$, of course. Thus, a measurement is described by the feature vector $o = \left[p(\hat{x}), \zeta, r, \ell, k \right]^T$, too, but the spatial distribution is defined as $p(\hat{x}) = \mathcal{N}\left(x; \hat{x}, \underline{P}\right)$ and appropriate features being optional. Further, note that all measured features must be complemented by an estimation of their uncertainty.

\subsection{Structural Aspects} \label{structural_aspects}
In order to enable estimation of the extent of dynamic objects only two conditions have been made, of which each is sufficient. Either a sensor is able to reliably measure all questioned features or the object in question is observed in all object reference points by a set of sensors. Moreover, the rules of Bayesian inference and \ac{FISST} apply, especially the conditions of the measurement model have to be met, e.g. an object gives rise to at most one measurement. Overall, a unilateral communication takes place, where sensors send preprocessed measurements to the centralized \ac{LMB} filter, as shown in Fig. 2 in \cite{env_model_rp}.

\subsection{Multi-Object Multi-Sensor Tracking using a Labeled Multi-Bernoulli filter} \label{lmb_pre}
The standard multi-object likelihood \cite{Vo2014} used within the \ac{LMB} filter \cite{Reuter2014a} is defined as
\begin{equation}
g(Z|\textbf{X}) = e^{-\langle \kappa, 1\rangle}\kappa^Z\sum_{\theta \in \Theta(\mathcal{L}(\textbf{X}))}\left[\psi_Z(\cdot;\theta)\right]^\textbf{X}\, ,
\end{equation}
where
\begin{equation}
\psi_Z(x,\ell;\theta) = \begin{cases}
	\frac{p_D(x,\ell)g(z_{\theta(\ell)}|x,\ell)}{\kappa(z_{\theta(\ell)})}, & \text{if } \theta(\ell) > 0 \\
	1-p_D(x,\ell), & \text{if } \theta(\ell) = 0\, .
\end{cases}
\end{equation}

In \cite{env_model_rp} the single object likelihood $g(z|x)$ of a measurement $z$ given an object with state $x$ was defined by
\begin{equation} \label{single_meas_likelihood_prior}
g(z|x) = \underset{\zeta \in \mathbb{A}}{\mathrm{max}}\,\,\mathcal{N}\left(z; \underline{H}^{(\zeta)} x, \underline{R}\right) \, ,
\end{equation}
where $\mathbb{A} = \mathbb{A}' \setminus \mathsf{C}$ or $\mathbb{A} = {\zeta}$, if $\zeta$ is specified by the sensor.

This measurement model evaluates the probability of all possible object reference point associations and decides for that $\zeta$ which maximizes the likelihood of the measurement and predicted measurement association, based on the Mahalanobis distance \cite{mahalanobis} between both.

Actually, the measurement matrix $\underline{H}^{(\zeta)}(x) \in \mathbb{R}^{d \times n}$ is a function, since it depends on the yaw angle of the object, and is non-linear due to this dependency. It specifies the transformation of a probability density from the center to a corner object reference point and is used to calculate the predicted measurement $z_+^{(\ell,j,\zeta)}$. Hereby, $d$ is the number of measured features and $n$ the overall number of features of the mean vector.

In order to construct the measurement matrix $\underline{H}^{(\zeta)}$, the full transformation matrix $\underline{\tilde{H}}^{(\zeta)}(x) \in \mathbb{R}^{n \times n}$ is defined first, by
\begin{equation}\label{HTrafo}
  \underline{\tilde{H}}^{(\zeta)}(x) =
  \begin{bmatrix}
    \underline{I} & \underline{\Delta}^{(\zeta)}(x) \\
    \underline{0} & \underline{I}
  \end{bmatrix} \, ,
\end{equation}
where $\underline{\Delta}^{(\zeta)}(x) \in \mathbb{R}^{2 \times n-2}$ is responsible for the correlation between position and extent of an object and is
\begin{equation}
  \underline{\Delta}^{(\zeta)}(x) =
  \begin{bmatrix}
    \underline{0} & \underline{f}^{(\zeta)}(x)
  \end{bmatrix} \, ,
\end{equation}
with
\begin{equation}
  \underline{f}^{(\zeta)}(x) = \frac{1}{2} \cdot
  \begin{bmatrix}
  	-\sin{(\varphi)} & \cos{(\varphi)} \\
    \cos{(\varphi)} & \sin{(\varphi)}\\
  \end{bmatrix}  \circ
  \begin{bmatrix}
  	\delta & \gamma \\
    \delta & \gamma\\
  \end{bmatrix} \, ,
\end{equation}
where $\circ$ denotes the element-wise multiplication and
\begin{equation}
  \delta =
  \begin{cases}
    1 & \text{if } \zeta = \mathsf{BL}, \mathsf{FL} \, ,\\
    -1 & \text{if } \zeta = \mathsf{BR}, \mathsf{FR} \, ,\\
  \end{cases} \,
  \gamma =
  \begin{cases}
    1 & \text{if } \zeta = \mathsf{FL}, \mathsf{FR} \, ,\\
    -1 & \text{if } \zeta = \mathsf{BL}, \mathsf{BR}\, .\\
  \end{cases}
\end{equation}

Finally, if $d < n$, i.e. a sensor measures a subset of the full state space only, the respective rows of $\underline{\tilde{H}}^{(\zeta)}$ are to be deleted to obtain $\underline{H}^{(\zeta)}$.

\subsection{Challenges due to Generic Measurements} \label{rp_err_ex}
Evaluations in \cite{env_model_rp} showed that the \ac{MAX} method performs poor in noisy situations where sensors provide position measurements only, i.e. the feature vector equals $o = \left[\hat{x}, \underline{P}, k \right]^T$ with $\hat{x} = \left[x^{(\zeta)}, y^{(\zeta)}\right]^T$, where the object reference point $\zeta$ is not specified. Since the determination of the object reference point in the innovation process of the \ac{LMB} filter is irreversible, erroneous decisions propagate through time and degrade the performance of the \ac{LMB} filter tremendously. This effect can be seen in Fig. \ref{fig:assocexample}, which shows a vehicle (green), moving to the right and a sensor under the influence of heavy noise, that measures the position of the vehicle's front left corner (red). Both, the upper and lower image, show the same situation, but the association of the first measurement is done differently. In the upper image the maximization leads to an \textit{erroneous} association of the measurement to the front right corner, while the \textit{true} association to the front left corner is shown in the lower image. Obviously, the \textit{erroneous} association leads to a large error in the estimation (blue) in the upper image, while the estimation in the lower image is only shifted and twisted slightly. In the end, the overall Mahalanobis distance in the lower image is smaller than in the upper due to the large extent error. Even worse, the probability of subsequent measurements falling out of the gating range is increased in the upper image, which can lead to track termination and non-continuous trajectories.

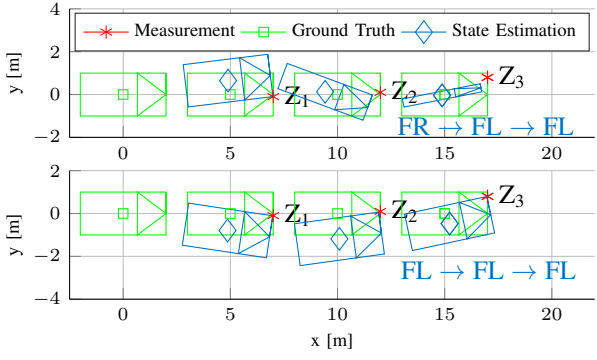
\begin{figure}
  \vspace{2pt}
  \subfloat{
    \setlength\figW{0.85\columnwidth}
%
%
\definecolor{mycolor1}{rgb}{0.00000,0.44700,0.74100}%
\begin{tikzpicture}

\begin{axis}[%
width=0.951\figW,
height=0.233\figW,
at={(0\figW,0\figW)},
scale only axis,
xmin=-2.5,
xmax=22,
xlabel style={font=\color{white!15!black}},
xlabel={x [m]},
ymin=-2,
ymax=4,
ylabel style={font=\color{white!15!black}},
ylabel={y [m]},
axis background/.style={fill=white},
axis x line*=bottom,
axis y line*=left,
xmajorgrids,
ymajorgrids,
legend style={at={(0.5,0.97)}, anchor=north, legend cell align=left, align=left, draw=white!15!black},
ticklabel style={font={\scriptsize}},legend style={font={\scriptsize}},legend columns={3},xlabel style={font={\scriptsize}},ylabel shift={-3pt},ylabel style={font={\scriptsize}},xlabel shift={-2pt}
]
\addplot [color=green, draw=none, mark size=1.8pt, mark=square, mark options={solid, green}, forget plot]
  table[row sep=crcr]{%
0	0\\
};
\addplot [color=green, forget plot]
  table[row sep=crcr]{%
2	1\\
2	-1\\
2	-1\\
-2	-1\\
-2	1\\
2	1\\
2	0\\
0.666666666666667	1\\
0.666666666666667	-1\\
2	0\\
};
\addplot [color=green, draw=none, mark size=1.8pt, mark=square, mark options={solid, green}, forget plot]
  table[row sep=crcr]{%
5	0\\
};
\addplot [color=red, draw=none, mark size=2.5pt, mark=asterisk, mark options={solid, red}, forget plot]
  table[row sep=crcr]{%
7	-0.1\\
};
\node[right, align=left]
at (axis cs:7.1,-0.1) {$\text{Z}_\text{1}$};
\addplot [color=green, forget plot]
  table[row sep=crcr]{%
7	1\\
7	-1\\
7	-1\\
3	-1\\
3	1\\
7	1\\
7	0\\
5.66666666666667	1\\
5.66666666666667	-1\\
7	0\\
};
\addplot [color=green, draw=none, mark size=1.8pt, mark=square, mark options={solid, green}, forget plot]
  table[row sep=crcr]{%
10	0\\
};
\addplot [color=red, draw=none, mark size=2.5pt, mark=asterisk, mark options={solid, red}, forget plot]
  table[row sep=crcr]{%
12	0.1\\
};
\node[right, align=left]
at (axis cs:12.1,0.1) {$\text{Z}_\text{2}$};
\addplot [color=green, forget plot]
  table[row sep=crcr]{%
12	1\\
12	-1\\
12	-1\\
8	-1\\
8	1\\
12	1\\
12	0\\
10.6666666666667	1\\
10.6666666666667	-1\\
12	0\\
};
\addplot [color=green, draw=none, mark size=1.8pt, mark=square, mark options={solid, green}, forget plot]
  table[row sep=crcr]{%
15	0\\
};
\addplot [color=red, draw=none, mark size=2.5pt, mark=asterisk, mark options={solid, red}, forget plot]
  table[row sep=crcr]{%
17	0.8\\
};
\addplot [color=red, draw=none, mark size=2.5pt, mark=asterisk, mark options={solid, red}]
  table[row sep=crcr]{%
17	0.8\\
};
\addlegendentry{Measurement}

\addplot [color=green, draw=none, mark size=1.8pt, mark=square, mark options={solid, green}]
  table[row sep=crcr]{%
15	0\\
};
\addlegendentry{Ground Truth}

\node[right, align=left]
at (axis cs:17.1,0.8) {$\text{Z}_\text{3}$};
\addplot [color=green, forget plot]
  table[row sep=crcr]{%
17	1\\
17	-1\\
17	-1\\
13	-1\\
13	1\\
17	1\\
17	0\\
15.6666666666667	1\\
15.6666666666667	-1\\
17	0\\
};
\addplot [color=mycolor1, draw=none, mark size=4.3pt, mark=diamond, mark options={solid, mycolor1}, forget plot]
  table[row sep=crcr]{%
4.8927641918761	0.648739836354908\\
};
\addplot [color=mycolor1, forget plot]
  table[row sep=crcr]{%
6.74358824588271	1.87692669748674\\
6.98561699103907	-0.0955449248635855\\
6.98561699103907	-0.0955449248635855\\
3.0419401378695	-0.579447024776928\\
2.79991139271314	1.3930245975734\\
6.74358824588271	1.87692669748674\\
6.86460261846089	0.890690886311579\\
5.42902929482619	1.71562599751563\\
5.67105803998255	-0.2568456248347\\
6.86460261846089	0.890690886311579\\
};
\addplot [color=mycolor1, draw=none, mark size=4.3pt, mark=diamond, mark options={solid, mycolor1}, forget plot]
  table[row sep=crcr]{%
9.42170099252496	0.125732914209633\\
};
\addplot [color=mycolor1, forget plot]
  table[row sep=crcr]{%
11.6252872086283	-0.021508646988622\\
11.1885983733617	-1.19923724950605\\
11.1885983733617	-1.19923724950605\\
7.21811477642164	0.272974475407889\\
7.65480361168818	1.45070307792532\\
11.6252872086283	-0.021508646988622\\
11.406942790995	-0.610372948247338\\
10.3017926763149	0.469228594649359\\
9.86510384104837	-0.708500007868073\\
11.406942790995	-0.610372948247338\\
};
\addplot [color=mycolor1, draw=none, mark size=4.3pt, mark=diamond, mark options={solid, mycolor1}, forget plot]
  table[row sep=crcr]{%
14.8778992999119	-0.040409633910105\\
};
\addplot [color=mycolor1, forget plot]
  table[row sep=crcr]{%
16.6548049939405	0.503254605320594\\
16.7287592710966	0.124763056192855\\
16.7287592710966	0.124763056192855\\
13.1009936058833	-0.584073873140804\\
13.0270393287273	-0.205582324013065\\
16.6548049939405	0.503254605320594\\
16.6917821325186	0.314008830756724\\
15.4455497722028	0.266975628876041\\
15.5195040493588	-0.111515920251698\\
16.6917821325186	0.314008830756724\\
};
\addplot [color=mycolor1, draw=none, mark size=4.3pt, mark=diamond, mark options={solid, mycolor1}]
  table[row sep=crcr]{%
14.8778992999119	-0.040409633910105\\
};
\addlegendentry{State Estimation}

\node[right, align=left, font=\color{mycolor1}]
at (axis cs:12.378,-1.49) {$\text{FR}\rightarrow\text{FL}\rightarrow\text{FL}$};
\end{axis}
\end{tikzpicture}%
    \label{fig:assocexample_false}}
  \hfill
  \vspace*{-25pt}
  \subfloat{
    \setlength\figW{0.85\columnwidth}
%
%
\definecolor{mycolor1}{rgb}{0.00000,0.44700,0.74100}%
\begin{tikzpicture}

\begin{axis}[%
width=0.951\figW,
height=0.233\figW,
at={(0\figW,0\figW)},
scale only axis,
xmin=-2.5,
xmax=22,
xlabel style={font=\color{white!15!black}},
xlabel={x [m]},
ymin=-4,
ymax=2,
ylabel style={font=\color{white!15!black}},
ylabel={y [m]},
axis background/.style={fill=white},
axis x line*=bottom,
axis y line*=left,
xmajorgrids,
ymajorgrids,
ticklabel style={font={\scriptsize}},legend style={font={\scriptsize}},legend columns={3},xlabel style={font={\scriptsize}},ylabel shift={-3pt},ylabel style={font={\scriptsize}},xlabel shift={-2pt}
]
\addplot [color=green, draw=none, mark size=1.8pt, mark=square, mark options={solid, green}, forget plot]
  table[row sep=crcr]{%
0	0\\
};
\addplot [color=green, forget plot]
  table[row sep=crcr]{%
2	1\\
2	-1\\
2	-1\\
-2	-1\\
-2	1\\
2	1\\
2	0\\
0.666666666666667	1\\
0.666666666666667	-1\\
2	0\\
};
\addplot [color=green, draw=none, mark size=1.8pt, mark=square, mark options={solid, green}, forget plot]
  table[row sep=crcr]{%
5	0\\
};
\addplot [color=red, draw=none, mark size=2.5pt, mark=asterisk, mark options={solid, red}, forget plot]
  table[row sep=crcr]{%
7	-0.1\\
};
\node[right, align=left]
at (axis cs:7.1,-0.1) {$\text{Z}_\text{1}$};
\addplot [color=green, forget plot]
  table[row sep=crcr]{%
7	1\\
7	-1\\
7	-1\\
3	-1\\
3	1\\
7	1\\
7	0\\
5.66666666666667	1\\
5.66666666666667	-1\\
7	0\\
};
\addplot [color=green, draw=none, mark size=1.8pt, mark=square, mark options={solid, green}, forget plot]
  table[row sep=crcr]{%
10	0\\
};
\addplot [color=red, draw=none, mark size=2.5pt, mark=asterisk, mark options={solid, red}, forget plot]
  table[row sep=crcr]{%
12	0.1\\
};
\node[right, align=left]
at (axis cs:12.1,0.1) {$\text{Z}_\text{2}$};
\addplot [color=green, forget plot]
  table[row sep=crcr]{%
12	1\\
12	-1\\
12	-1\\
8	-1\\
8	1\\
12	1\\
12	0\\
10.6666666666667	1\\
10.6666666666667	-1\\
12	0\\
};
\addplot [color=green, draw=none, mark size=1.8pt, mark=square, mark options={solid, green}, forget plot]
  table[row sep=crcr]{%
15	0\\
};
\addplot [color=red, draw=none, mark size=2.5pt, mark=asterisk, mark options={solid, red}, forget plot]
  table[row sep=crcr]{%
17	0.8\\
};
\addplot [color=red, draw=none, mark size=2.5pt, mark=asterisk, mark options={solid, red}, forget plot]
  table[row sep=crcr]{%
17	0.8\\
};
\addplot [color=green, draw=none, mark size=1.8pt, mark=square, mark options={solid, green}, forget plot]
  table[row sep=crcr]{%
15	0\\
};
\node[right, align=left]
at (axis cs:17.1,0.8) {$\text{Z}_\text{3}$};
\addplot [color=green, forget plot]
  table[row sep=crcr]{%
17	1\\
17	-1\\
17	-1\\
13	-1\\
13	1\\
17	1\\
17	0\\
15.6666666666667	1\\
15.6666666666667	-1\\
17	0\\
};
\addplot [color=mycolor1, draw=none, mark size=4.3pt, mark=diamond, mark options={solid, mycolor1}, forget plot]
  table[row sep=crcr]{%
4.86893401229302	-0.792904244433776\\
};
\addplot [color=mycolor1, forget plot]
  table[row sep=crcr]{%
6.97805872970701	-0.106626709628965\\
6.6830297160281	-2.06900796516902\\
6.6830297160281	-2.06900796516902\\
2.75980929487902	-1.47918177923859\\
3.05483830855793	0.483199476301473\\
6.97805872970701	-0.106626709628965\\
6.83054422286756	-1.08781733739899\\
5.67031858932399	0.0899820190145143\\
5.37528957564507	-1.87239923652555\\
6.83054422286756	-1.08781733739899\\
};
\addplot [color=mycolor1, draw=none, mark size=4.3pt, mark=diamond, mark options={solid, mycolor1}, forget plot]
  table[row sep=crcr]{%
10.087492971877	-1.18434102177905\\
};
\addplot [color=mycolor1, forget plot]
  table[row sep=crcr]{%
11.9170238985581	0.0634244791798326\\
12.1868597726863	-1.88915389958809\\
12.1868597726863	-1.88915389958809\\
8.25796204519588	-2.43210652273793\\
7.98812617106767	-0.479528143970013\\
11.9170238985581	0.0634244791798326\\
12.0519418356222	-0.912864710204127\\
10.6073913227279	-0.117559728536782\\
10.8772271968562	-2.0701381073047\\
12.0519418356222	-0.912864710204127\\
};
\addplot [color=mycolor1, draw=none, mark size=4.3pt, mark=diamond, mark options={solid, mycolor1}, forget plot]
  table[row sep=crcr]{%
15.2294894994294	-0.468391610258749\\
};
\addplot [color=mycolor1, forget plot]
  table[row sep=crcr]{%
16.9647448528265	0.792724677840022\\
17.3298638111753	-0.904227799851998\\
17.3298638111753	-0.904227799851998\\
13.4942341460323	-1.72950789835752\\
13.1291151876834	-0.0325554206654993\\
16.9647448528265	0.792724677840022\\
17.1473043320009	-0.0557515610059879\\
15.6862016311121	0.517631311671515\\
16.0513205894609	-1.1793211660205\\
17.1473043320009	-0.0557515610059879\\
};
\addplot [color=mycolor1, draw=none, mark size=4.3pt, mark=diamond, mark options={solid, mycolor1}, forget plot]
  table[row sep=crcr]{%
15.2294894994294	-0.468391610258749\\
};
\node[right, align=left, font=\color{mycolor1}]
at (axis cs:12.529,-2.668) {$\text{FL}\rightarrow\text{FL}\rightarrow\text{FL}$};
\end{axis}
\end{tikzpicture}%
    \label{fig:assocexample_corr}}
  \vspace{-5pt}
  \caption{The scenario points out the effect of an erroneous track to measurement association, as it can happen in challenging scenarios and demonstrates the presented approach.}%
  \label{fig:assocexample}
  \vspace*{-15pt}
\end{figure}

Very similar effects occur if the detector's specification of the object reference point $\zeta$ is inaccurate.


\section{Incorporation of Multiple Object Reference Point Association Hypotheses} \label{mult_hyp}
In order to tackle the performance problems of the filter in case of unknown association of measurements to an object reference point, our proposed method, referred to as \ac{MH} method, avoids the deterministic choice on a specific object reference point. It rather incorporates all possible object reference points and is defined as
\begin{equation} \label{single_meas_likelihood}
g(z|x) = \sum_{\zeta \in \mathbb{A}}w^{(\zeta)}\mathcal{N}\left(z; \underline{H}^{(\zeta)} x, \underline{R}\right) \, .
\end{equation}

Here, $\mathbb{A} = \mathbb{A}' \setminus \mathsf{C}$ denotes the set of object reference points whose association is in question and $\underline{H}^{(\zeta)}$ is the respective measurement matrix, which transforms the spatial distribution $p(x,\ell)$ of a track $\ell$ into the respective object reference point $\zeta$ (see \eqref{HTrafo}). Obviously, if $\zeta$ is specified by a sensor, \ac{MH} and \ac{MAX} are equal.

\begin{proposition}
If the likelihood for a measurement $z$ given a single object with state $x$ follows the form of \eqref{single_meas_likelihood}, the updated posterior density yields
\begin{align} \label{new_upd}
p^{(\theta)}&(x,\ell|Z)\nonumber\\
&= \sum_{j=1}^{J_+^{(\ell)}} \sum_{\zeta \in \mathbb{A}} w^{(\ell,j,\theta,\zeta)}(Z) \mathcal{N}\left(x; \hat{x}^{(\ell,j,\theta,\zeta)}, \underline{P}^{(\ell,j,\zeta)}\right)\, ,
\end{align}
where
\begin{align}
w^{(\ell,j,\theta,\zeta)}&(Z)\nonumber \\
= &\frac{p_D(x,\ell)}{\kappa(z_{\theta(\ell)})\eta_Z^{(\theta)}(\ell)} w_+^{(\ell,j)}w^{(\zeta)} \mathcal{N}\left(z_{\theta(\ell)}; z_+^{(\ell,j,\zeta)}, \underline{S}^{(\ell,j,\zeta)}\right)
\end{align}
and
\begin{align}
\underline{S}^{(\ell,j,\zeta)} &= \underline{H}^{(\zeta)} \underline{P}_+^{(\ell,j)} \left[\underline{H}^{(\zeta)}\right]^T + \underline{R}\, ,\\
z_+^{(\ell,j,\zeta)} &= \underline{H}^{(\zeta)} \hat{x}_+^{(\ell,j)}\, , \\
\hat{x}^{(\ell,j,\theta,\zeta)} &= \hat{x}_+^{(\ell,j)} + \underline{K}^{(\ell,j,\zeta)} \left( z_{\theta(\ell)} - z_+^{(\ell,j,\zeta)} \right)\, , \\
\underline{K}^{(\ell,j,\zeta)} &= \underline{P}_+^{(\ell,j,\zeta)} \left[\underline{H}^{(\zeta)}\right]^T \left[ \underline{S}^{(\ell,j,\zeta)} \right]^{-1}\, ,\\
\underline{P}^{(\ell,j,\zeta)} &= \underline{P}_+^{(\ell,j,\zeta)} - \underline{K}^{(\ell,j,\zeta)} \underline{S}^{(\ell,j,\zeta)} \left[ \underline{K}^{(\ell,j,\zeta)} \right]^{T}\, .
\end{align}
\end{proposition}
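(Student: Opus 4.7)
The plan is to apply Bayes' rule at the single-track level and exploit the standard Gaussian product identity that underlies the Kalman update. In the \ac{LMB} innovation the single-track posterior given an association $\theta$ is proportional to $\psi_Z(x,\ell;\theta)\, p_+(x,\ell)$, and since the predicted density is the Gaussian mixture $p_+(x,\ell) = \sum_{j=1}^{J_+^{(\ell)}} w_+^{(\ell,j)}\,\mathcal{N}(x;\hat{x}_+^{(\ell,j)}, \underline{P}_+^{(\ell,j)})$ and the new single-object likelihood \eqref{single_meas_likelihood} is itself a mixture over $\zeta \in \mathbb{A}$, the product becomes a double mixture indexed by $(j,\zeta)$. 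The proof then reduces to simplifying each cross term and collecting the normaliser.

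For each fixed $(j,\zeta)$ I would invoke the well-known identity
\begin{equation*}
\mathcal{N}\bigl(x; \hat{x}_+^{(\ell,j)}, \underline{P}_+^{(\ell,j)}\bigr)\,\mathcal{N}\bigl(z; \underline{H}^{(\zeta)} x, \underline{R}\bigr) = \mathcal{N}\bigl(z; z_+^{(\ell,j,\zeta)}, \underline{S}^{(\ell,j,\zeta)}\bigr)\,\mathcal{N}\bigl(x; \hat{x}^{(\ell,j,\theta,\zeta)}, \underline{P}^{(\ell,j,\zeta)}\bigr),
\end{equation*}
with the innovation covariance $\underline{S}^{(\ell,j,\zeta)}$, predicted measurement $z_+^{(\ell,j,\zeta)}$, Kalman gain $\underline{K}^{(\ell,j,\zeta)}$, corrected mean $\hat{x}^{(\ell,j,\theta,\zeta)}$, and posterior covariance $\underline{P}^{(\ell,j,\zeta)}$ all taking precisely the stated forms. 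This is the textbook Kalman derivation, applied verbatim to the $\zeta$-specific measurement matrix $\underline{H}^{(\zeta)}$ defined in \eqref{HTrafo}. The $z$-marginal factor is independent of $x$ and migrates to the mixture weight, while the $x$-conditional factor becomes the new mixture component.

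Substituting back and absorbing the contribution $p_D(x,\ell)/\kappa(z_{\theta(\ell)})$ from $\psi_Z$ for the case $\theta(\ell) > 0$ yields
\begin{equation*}
p^{(\theta)}(x,\ell\mid Z) \propto \sum_{j=1}^{J_+^{(\ell)}}\sum_{\zeta \in \mathbb{A}} \frac{p_D(x,\ell)}{\kappa(z_{\theta(\ell)})}\, w_+^{(\ell,j)} w^{(\zeta)}\, \mathcal{N}\bigl(z_{\theta(\ell)}; z_+^{(\ell,j,\zeta)}, \underline{S}^{(\ell,j,\zeta)}\bigr)\, \mathcal{N}\bigl(x; \hat{x}^{(\ell,j,\theta,\zeta)}, \underline{P}^{(\ell,j,\zeta)}\bigr).
\end{equation*}
Defining $\eta_Z^{(\theta)}(\ell)$ as the sum over $(j,\zeta)$ of the scalar innovation Gaussians weighted by $w_+^{(\ell,j)} w^{(\zeta)}$ (up to the common $p_D/\kappa$ prefactor) gives precisely the weights $w^{(\ell,j,\theta,\zeta)}(Z)$ stated in the proposition and guarantees $\sum_{j,\zeta} w^{(\ell,j,\theta,\zeta)}(Z) = 1$, so that \eqref{new_upd} is a legitimate density.

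The main obstacle I anticipate is not analytical but notational: the update now carries three layers of indexing (prior mixture component $j$, association $\theta$, and reference-point hypothesis $\zeta$), and one has to be careful that the Kalman update is applied with the $\zeta$-dependent non-linear matrix $\underline{H}^{(\zeta)}(x)$ evaluated at the respective component mean, exactly as in the \ac{MAX} case. Once the Gaussian product identity has been applied componentwise, the remainder is bookkeeping; no new Gaussian algebra beyond the classical Kalman update is required, because the likelihood mixture enters linearly in the posterior and the \ac{MAX} derivation of \cite{env_model_rp} is recovered as the special case $|\mathbb{A}|=1$.
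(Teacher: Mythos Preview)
Your proposal is correct and follows essentially the same route as the paper: both start from the LMB single-track posterior $p^{(\theta)}(x,\ell|Z) = p_+(x,\ell)\,\psi_Z(x,\ell;\theta)/\eta_Z^{(\theta)}(\ell)$, substitute the Gaussian-mixture prior and the mixture likelihood \eqref{single_meas_likelihood}, and apply the standard Gaussian product identity (the paper cites \cite{Ho64}) termwise to obtain the $(j,\zeta)$-indexed mixture. The paper additionally remarks that the resulting double sum can be re-indexed as a single Gaussian mixture with $\tilde{J}^{(\ell)} = J_+^{(\ell)}\cdot|\mathbb{A}|$ components, so the posterior remains of LMB form; you may wish to add this closure observation explicitly.
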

\begin{proof}
If the predicted spatial density of track $x$ with label $\ell$ follows a Gaussian mixture, the updated posterior spatial distribution, following \cite{Reuter2014a} equals
\begin{equation} \label{posterior_lmb}
p^{(\theta)}(x,\ell|Z) = \frac{p_+(x,\ell) \cdot \psi_Z(x,\ell;\theta)}{\eta_Z^{(\theta)}(\ell)} \, ,
\end{equation}
with the generalized measurement likelihood for object x and association $\theta$ being
\begin{equation} \label{generalized_likelihood}
\psi_Z(x, \ell; \theta) = \frac{p_D(x,\ell)g(z_{\theta(\ell)}|x,\ell)}{\kappa(z_{\theta(\ell)})} \, ,
\end{equation}
and the normalization constant
\begin{equation}
\eta_Z^{(\theta)}(\ell) = \langle p_+(\cdot,\ell), \psi_Z(\cdot,\ell;\theta) \rangle \, .
\end{equation}
Substituting \eqref{single_meas_likelihood} into \eqref{posterior_lmb} and \eqref{generalized_likelihood} and using the Gaussian identities of \cite{Ho64} yields
\begin{align} \label{proofed_new_upd}
p^{(\theta)}&(x,\ell|Z) \nonumber \\
&= \frac{p_D(x,\ell)}{\kappa(z_{\theta(\ell)})\eta_Z^{(\theta)}(\ell)} \sum_{j=1}^{J_+^{(\ell)}}w_+^{(\ell,j)}\mathcal{N}\left(x; \hat{x}_+^{(\ell,j)}, \underline{P}_+^{(\ell,j)}\right) \nonumber \\
&\sum_{\zeta \in \mathbb{A}}w^{(\zeta)}\mathcal{N}\left(z; \underline{H}_\zeta x, \underline{R}\right) \\
&= \sum_{j=1}^{J_+^{(\ell)}} \sum_{\zeta \in \mathbb{A}} w^{(\ell,j,\theta,\zeta)}(Z) \mathcal{N}\left(x; \hat{x}^{(\ell,j,\theta,\zeta)}, \underline{P}^{(\ell,j,\zeta)}\right)\, .
\end{align}

In contrast to the standard form of the \ac{LMB} updated posterior density the number of mixture components of the spatial distribution is increased within the innovation step. But \eqref{new_upd} still represents a Gaussian mixture since it can be rewritten
\begin{align}
p^{(\theta)}&(x,\ell|Z) = \sum_{j=1}^{\tilde{J}^{(\ell)}} w^{(\ell,j,\theta)}(Z) \mathcal{N}\left(x; \hat{x}^{(\ell,j,\theta)}, \underline{P}^{(\ell,j)}\right)\, ,
\end{align}
where $\tilde{J}^{(\ell)} = J_+^{(\ell)} \cdot |\mathbb{A}|$. The updated posterior density is thus, still of the form of an \ac{LMB} distribution.
\end{proof}

If no further information is available the weight of all associations is equally set to $w^{(\zeta)} = (|\mathbb{A}|)^{-1}$.

\section{Truncation of Tracks via Validation Gate}\label{sec:validation_gate}
If not truncated, the number of mixture components increases rapidly even in the standard \ac{LMB} filter. The proposed measurement model, however, further increases the number of mixture components and depends on the survival of many components, since the idea is to keep even improbable components for some time to overcome weak sensor measurements. It is, thus, all the more important to develop a strategy to reduce the number of mixture components to a tolerable level. To do so, we suggest using a twofold strategy based on a traditional pruning of components and truncation via a model based gating, which we call validation gate.

\subsection{Pruning of Mixture Components}
Investigation of the updated posterior densities of the proposed filter have shown that most mixture components are very close to each other and can be merged or pruned. However, in cases of large noise, where false associations lead to high component weights of \textit{false} components, the \textit{correct} components are often separated clearly from others. Hence, it showed that setting the pruning level to very low values, while setting a moderately high maximal merging distance constitutes a good compromise, where a sufficient suppression of components takes place, while unlikely, but important components survive.

\subsection{Validation Gate}
Irrespective of other truncation and pruning strategies we propose a special kind of gating, named validation gate. The idea bases on the ideas of measurement validation procedures \cite{BarShalom2011}, where, based on the Mahalanobis distance between measurements and possibly associated tracks, a gating region is defined, within which a measurement has to lie to be considered. The validation gate evaluates the posterior density after the innovation and compares the density with model constraints. Association of measurements to object reference points, which lead to invalid posterior densities are thrown away. Therefore, the measurement model \eqref{single_meas_likelihood} is adapted, such that
\begin{equation}
\tilde{\mathbb{A}} = \{ \zeta \in \mathbb{A} : f_u(x) \neq 0 \,\, \forall u \in \mathbb{U} \} \,
\end{equation}
where $\mathbb{U}$ is the set of all model constraints that apply.

Hereby, $f(x)$ is a function, which tests the respective estimate $\hat{x}^{(\ell,j,\theta,\zeta)}$ on arbitrary model constraints. The following is a set of such, which apply to vehicles and are used within the evaluation of this paper.

\subsubsection{Vehicle Extent}
The extent of a vehicle is, of course, lower bounded. Due to the coupling of the position and the object's extent, however, the updated posterior density may have a negative length or width (in 3D extensions the height may also be negative). Thus, the evaluation function is
\begin{align}
f_{e}(x) = \begin{cases}
   1 &\text{if } x(w) \geq 0 \text{ and } x(l) \geq 0 \\
   0 &\text{otherwise} \,.
\end{cases}
\end{align}

\subsubsection{Vehicle Extent Ratio}
The length of typical consumer road vehicles is much larger than their width. Therefore, the length to width ratio $r = \nicefrac{l}{w}$ of vehicles on the market is typically lower and upper bounded and can be used to validate a track. Thus, the evaluation function, based on vehicle dimensions of \textit{Smart Fortwo} and \textit{Ford Super Duty}, is
\begin{equation}
f_{r}(x) = \begin{cases}
   1 &\text{if } 1.5 \leq r \leq 4 \\
   0 &\text{otherwise} \,.
\end{cases}
\end{equation}

\subsubsection{Yaw Rate}
Typical vehicles are not able to turn when stationary and the yaw rate is upper bounded by the vehicle's physical dimensions and speed. Based on the single track model \cite{Paden16}, the velocity $v$ and a maximum steering angle of $\gamma = 75^{\circ}$, the minimum turning radius is calculated, from which the maximum yaw rate of $\varphi'_{\text{max}} = \nicefrac{v\gamma}{l}$ follows. Thus, the evaluation function is
\begin{equation}
f_{\varphi'}(x) =
\begin{cases}
   1 &\text{if } |x(\varphi')| \leq \varphi'_{\text{max}} \\
   0 &\text{otherwise} \,.
\end{cases}
\end{equation}

\subsubsection{Acceleration}
The acceleration of a vehicle is more or less upper bounded, too. Here, an upper limit of $a_{\text{max}} = 10\,\text{m\,s}^{-2}$ is used, which equals the average acceleration of a vehicle which accelerates from $0\,\text{km\,h}^{-1}$ to $100\,\text{km\,h}^{-1}$ in under $2.8\,s$. Thus, the evaluation function is
\begin{equation}
f_{a}(x) =
\begin{cases}
   1 &\text{if } x(a) \leq a_{\text{max}} \\
   0 &\text{otherwise} \,.
\end{cases}
\end{equation}

\subsubsection{Velocity}
The velocity of vehicles can be lower bounded from practical reasons, a lower bound of $v_{\text{max}} = -5\,\text{m\,s}^{-1}$ is assumed, which results in an evaluation function
\begin{equation}
f_{v}(x) =
\begin{cases}
   1 &\text{if } x(v) \geq v_{\text{max}} \\
   0 &\text{otherwise} \,.
\end{cases}
\end{equation}


\section{EVALUATION} \label{evaluation}
The performance of the proposed method, considering multiple concurrent object reference point hypotheses, is evaluated in a simulation and a real world demonstration. Furthermore, an examination of the computational costs is given to assess the real time capability.

\subsection{Simulation}
In order to work out the improvements of the proposed method a simulation has been carried out, that equals the one of \cite{env_model_rp}, i.e. a five seconds lasting scenario with three vehicles, as can be seen in Fig. \ref{fig:scenario_1}, observed by three distributed sensors. The vehicles' starting points are marked by circles, the trajectories' ends by a triangle. 
As in \cite{env_model_rp}, the sensors are set up such that the measurement vector equals $o = \left[\hat{x}, \underline{P}, k \right]$ with $\hat{x} = \left[x^{(\zeta)}, y^{(\zeta)}\right]^T$, where $\zeta$ is unknown. All objects are measured in random object reference points, such that the conditions made in Section \ref{structural_aspects} hold. The particular covariance matrices of the sensor measurements are $R = \text{diag}(\sigma^2_x, \sigma^2_y)$, where $\sigma = \sigma_x = 2\sigma_y$ expresses the standard deviation of the measurement error in longitudinal and lateral direction in sensor coordinates. Furthermore, a measurement is missed by a chance of $1-p_D = 0.05$ and clutter events are Poisson distributed with the expected number of such being $\lambda_c = 0.1$, which are distributed equally over the measurement space ($[x_{\text{min}}, x_{\text{max}}, y_{\text{min}}, y_{\text{max}}] = [-30, 30, -10, 10]$). Tracks are predicted using a \ac{CTRA} model with parameters $\sigma_j = 1\,\text{m\,s}^{-3}$ (jerk), $\sigma_\varphi = 0.5\,\text{rad\,s}^{-2}$ (turn acceleration), and pseudo noise in the extent of $\sigma_e = 0.1\,\text{m}$. The scenario is repeated four times, where the measurement error is varied in steps of $0.5\,\text{m}$ from $\sigma = 0.5\,\text{m}$ to $2.0\,\text{m}$ and each scenario is repeated in a Monte Carlo simulation with 100 trials.

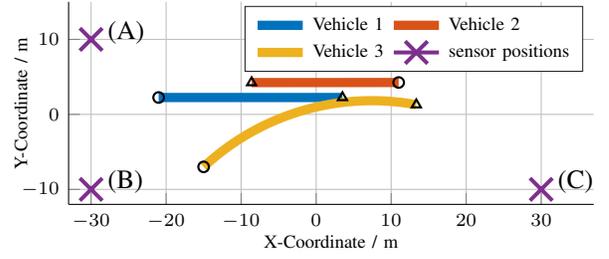
\begin{figure}
  \vspace{4pt}
  \subfloat{
    \setlength\figW{0.85\columnwidth}
%
%
\definecolor{mycolor1}{rgb}{0.00000,0.44700,0.74100}%
\definecolor{mycolor2}{rgb}{0.85000,0.32500,0.09800}%
\definecolor{mycolor3}{rgb}{0.92900,0.69400,0.12500}%
\definecolor{mycolor4}{rgb}{0.49400,0.18400,0.55600}%
\begin{tikzpicture}

\begin{axis}[%
width=0.951\figW,
height=0.367\figW,
at={(0\figW,0\figW)},
scale only axis,
xmin=-33,
xmax=37,
xlabel style={font=\color{white!15!black}},
xlabel={X-Coordinate / m},
ymin=-12,
ymax=15,
ylabel style={font=\color{white!15!black}},
ylabel={Y-Coordinate / m},
axis background/.style={fill=white},
axis x line*=bottom,
axis y line*=left,
xmajorgrids,
ymajorgrids,
legend style={legend cell align=left, align=left, draw=white!15!black},
ticklabel style={font={\scriptsize}},legend style={font={\scriptsize}},legend columns={2},xlabel style={font={\scriptsize}},xlabel shift={-4pt},ylabel style={font={\scriptsize}},ylabel shift={-9pt}
]
\addplot [color=mycolor1, line width=3.5pt]
  table[row sep=crcr]{%
-21	2.25\\
-20.5	2.25\\
-20	2.25\\
-19.5	2.25\\
-19	2.25\\
-18.5	2.25\\
-18	2.25\\
-17.5	2.25\\
-17	2.25\\
-16.5	2.25\\
-16	2.25\\
-15.4950000000001	2.25\\
-14.9950000000001	2.25\\
-14.4950000000002	2.25\\
-13.9950000000002	2.25\\
-13.4950000000003	2.25\\
-12.9950000000003	2.25\\
-12.4950000000004	2.25\\
-11.9950000000004	2.25\\
-11.4950000000005	2.25\\
-10.9950000000005	2.25\\
-10.4950000000006	2.25\\
-9.99500000000066	2.25\\
-9.49500000000071	2.25\\
-8.99500000000077	2.25\\
-8.49500000000082	2.25\\
-7.99500000000088	2.25\\
-7.49500000000093	2.25\\
-6.99500000000099	2.25\\
-6.49500000000104	2.25\\
-5.9950000000011	2.25\\
-5.49500000000115	2.25\\
-4.99500000000121	2.25\\
-4.49500000000126	2.25\\
-3.99500000000132	2.25\\
-3.49500000000137	2.25\\
-2.99500000000143	2.25\\
-2.49500000000148	2.25\\
-1.99500000000154	2.25\\
-1.49500000000159	2.25\\
-0.995000000001646	2.25\\
-0.495000000001479	2.25\\
0.00499999999868805	2.25\\
0.504999999998855	2.25\\
1.00499999999902	2.25\\
1.50499999999919	2.25\\
2.00499999999936	2.25\\
2.50499999999952	2.25\\
3.00499999999969	2.25\\
3.50499999999986	2.25\\
};
\addlegendentry{Vehicle 1}

\addplot [color=black, line width=0.8pt, draw=none, mark=o, mark options={solid, black}, forget plot]
  table[row sep=crcr]{%
-21	2.25\\
};
\addplot [color=black, line width=0.8pt, draw=none, mark=triangle, mark options={solid, black}, forget plot]
  table[row sep=crcr]{%
3.50499999999986	2.25\\
};
\addplot [color=mycolor2, line width=3.5pt]
  table[row sep=crcr]{%
11	4.25\\
10.6000000702476	4.24976293857796\\
10.2000001404953	4.24952587715592\\
9.80000021074296	4.24928881573388\\
9.40000028099061	4.24905175431184\\
9.00000035123827	4.2488146928898\\
8.60000042148592	4.24857763146776\\
8.20000049173357	4.24834057004572\\
7.80000056198122	4.24810350862368\\
7.40000063222888	4.24786644720164\\
7.00000070247653	4.2476293857796\\
6.59600077342671	4.24738995374334\\
6.1960008436744	4.2471528923213\\
5.7960009139221	4.24691583089926\\
5.3960009841698	4.24667876947722\\
4.9960010544175	4.24644170805518\\
4.59600112466519	4.24620464663314\\
4.19600119491289	4.2459675852111\\
3.79600126516059	4.24573052378906\\
3.39600133540829	4.24549346236702\\
2.99600140565598	4.24525640094498\\
2.59600147590368	4.24501933952294\\
2.19600154615138	4.2447822781009\\
1.79600161639907	4.24454521667886\\
1.39600168664677	4.24430815525683\\
0.99600175689447	4.24407109383479\\
0.596001827142167	4.24383403241275\\
0.196001897389865	4.24359697099071\\
-0.203998032362438	4.24335990956867\\
-0.60399796211474	4.24312284814663\\
-1.00399789186704	4.24288578672459\\
-1.40399782161934	4.24264872530255\\
-1.80399775137165	4.24241166388051\\
-2.20399768112395	4.24217460245847\\
-2.60399761087625	4.24193754103643\\
-3.00399754062855	4.24170047961439\\
-3.40399747038086	4.24146341819235\\
-3.80399740013316	4.24122635677031\\
-4.20399732988546	4.24098929534827\\
-4.60399725963777	4.24075223392623\\
-5.00399718939007	4.24051517250419\\
-5.40399711914255	4.24027811108215\\
-5.80399704889503	4.24004104966011\\
-6.20399697864751	4.23980398823807\\
-6.60399690839999	4.23956692681603\\
-7.00399683815247	4.23932986539399\\
-7.40399676790495	4.23909280397195\\
-7.80399669765743	4.23885574254991\\
-8.20399662740991	4.23861868112787\\
-8.60399655716239	4.23838161970583\\
};
\addlegendentry{Vehicle 2}

\addplot [color=black, line width=0.8pt, draw=none, mark=o, mark options={solid, black}, forget plot]
  table[row sep=crcr]{%
11	4.25\\
};
\addplot [color=black, line width=0.8pt, draw=none, mark=triangle, mark options={solid, black}, forget plot]
  table[row sep=crcr]{%
-8.60399655716239	4.23838161970583\\
};
\addplot [color=mycolor3, line width=3.5pt]
  table[row sep=crcr]{%
-15	-7\\
-14.5386748626225	-6.57834567894669\\
-14.0694220418397	-6.16553211445693\\
-13.592410932824	-5.76170832774437\\
-13.1078137314052	-5.36702009481049\\
-12.6158053719105	-4.9816098938211\\
-12.1165634640145	-4.60561685367321\\
-11.6102682286241	-4.23917670377106\\
-11.097102432821	-3.88242172502923\\
-10.5772513238845	-3.5354807021207\\
-10.0509025624194	-3.198478876987\\
-9.51288838563856	-2.86831970358603\\
-8.97405643959467	-2.55165997649433\\
-8.42930557696477	-2.24529459577515\\
-7.87883244689429	-1.9493341560038\\
-7.32283576420508	-1.66388549568494\\
-6.7615162376615	-1.38905165868498\\
-6.19507649751672	-1.12493185703439\\
-5.62372102236541	-0.871621435113245\\
-5.04765606532923	-0.629211835232915\\
-4.46708957960163	-0.397790564626403\\
-3.88223114337901	-0.177441163859168\\
-3.29329188420534	0.0317568233280914\\
-2.70048440275731	0.229727878733882\\
-2.10402269609787	0.416400536957159\\
-1.50412208042558	0.591707411195562\\
-0.90099911334779	0.755585217571313\\
-0.294871515705745	0.907974797976017\\
0.314041907020324	1.04882114142609\\
0.925521343697597	1.17807340392113\\
1.53934605689006	1.29568492679805\\
2.15529446254236	1.40161325357435\\
2.77314420996922	1.49582014527444\\
3.39267226212166	1.5782715942335\\
4.01365497610088	1.64893783637387\\
4.63586818389098	1.70779336194958\\
5.25908727328115	1.75481692475504\\
5.88308726894826	1.78999154979478\\
6.50764291367047	1.81330453941114\\
7.1325287496427	1.82474747786805\\
7.75751919986438	1.824316234389\\
8.38238864957064	1.81201096464819\\
9.00691152767588	1.78783611071435\\
9.63086238820342	1.75180039944724\\
10.2540159916687	1.70391683934726\\
10.8761473863882	1.64420271585959\\
11.4970319896849	1.57267958513432\\
12.1164456689596	1.48937326624492\\
12.7341648226011	1.39431383186779\\
13.3499664607031	1.2875355974264\\
};
\addlegendentry{Vehicle 3}

\addplot [color=black, line width=0.8pt, draw=none, mark=o, mark options={solid, black}, forget plot]
  table[row sep=crcr]{%
-15	-7\\
};
\addplot [color=black, line width=0.8pt, draw=none, mark=triangle, mark options={solid, black}, forget plot]
  table[row sep=crcr]{%
13.3499664607031	1.2875355974264\\
};
\addplot [color=mycolor4, line width=1.5pt, draw=none, mark size=6.0pt, mark=x, mark options={solid, mycolor4}]
  table[row sep=crcr]{%
-30	10\\
-30	-10\\
30	-10\\
};
\addlegendentry{sensor positions}

\node[right, align=left]
at (axis cs:-29,11) {(A)};
\node[right, align=left]
at (axis cs:-29,-9) {(B)};
\node[right, align=left]
at (axis cs:31,-9) {(C)};
\end{axis}
\end{tikzpicture}
  \caption{Simulation scenario with three vehicles observed by three sensors.}
  \label{fig:scenario_1}
  \hfill
  \vspace*{-15pt}
\end{figure}%

\begin{figure}
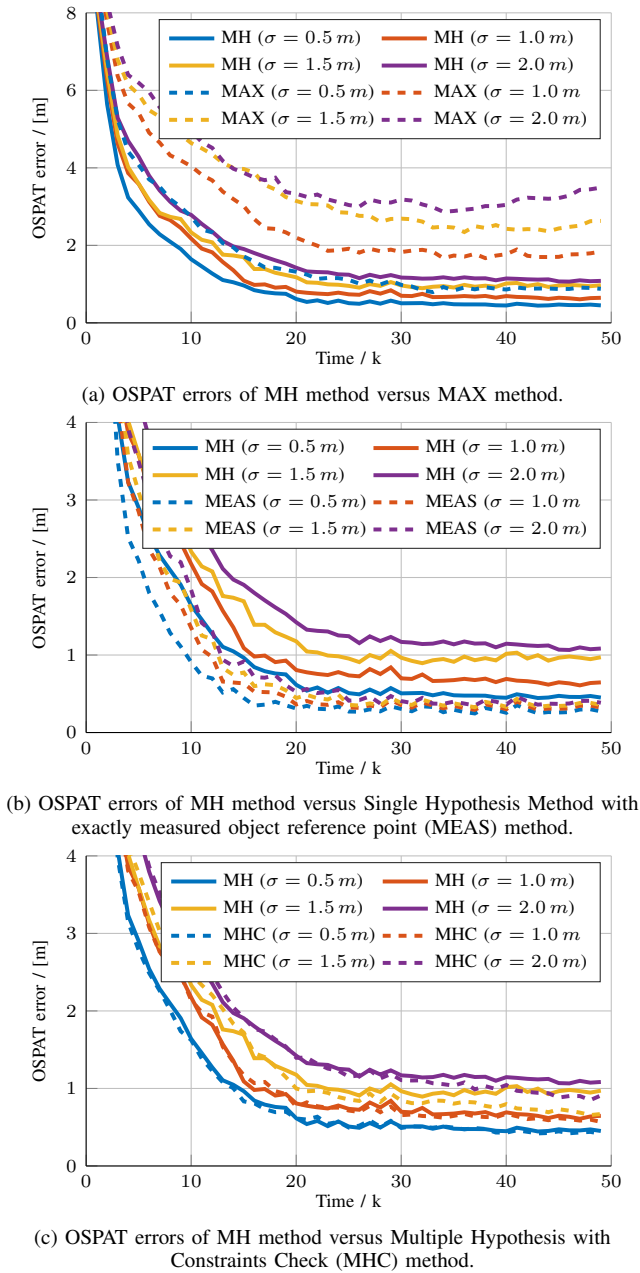

  \vspace{2pt}
  \captionsetup[subfigure]{justification=centering}
  \subfloat[OSPAT errors of \ac{MH} method versus \ac{MAX} method.]{
    \setlength\figW{0.85\columnwidth}
    \input{multihyp_vs_maxhyp_ospat.tex}
    \label{fig:multihyp_vs_maxhyp_ospat}}
  \hfill
  \vspace{-10pt}
  \subfloat[OSPAT errors of \ac{MH} method versus \ac{MEAS} method.]{
    \setlength\figW{0.85\columnwidth}
    \input{multihyp_vs_measured_ospat.tex}
    \label{fig:multihyp_vs_measured_ospat2}}
  \hfill
  \vspace{-10pt}
  \subfloat[OSPAT errors of \ac{MH} method versus \ac{MHC} method.]{
    \setlength\figW{0.85\columnwidth}
    \input{multihyp_vs_multihyp_ccheck_ospat.tex}
    \label{fig:multihyp_vs_multihyp_ccheck_ospat2}}
  \hfill
  \vspace{-1pt}
  \caption{Comparison of the \ac{OSPAT} errors for the different methods.}
  \label{fig:scenario1_ospat}
  \vspace{-15pt}
\end{figure}%

Some modifications to \cite{env_model_rp}, though, have been made. The adaptive birth process, introduced in \cite{Reuter2014a}, was improved, setting up multiple position and orientation hypotheses of a track and the process model was revised, as well as the parameters of the \ac{UKF} filter. Since the \ac{MH} method requires the incorporation of multiple hypotheses the pruning and merging parameters of the \ac{LMB} filter had to be adjusted, too. Hereby, a trade-off between optimality and computational manageability has been created, which seems to guarantee a fair comparison between all methods. The \ac{LMB} filter prunes components having weights less than $10^{-5}$, hence keeps most of them, and merges every two components having a Bhattacharyya distance closer than $1$. Further, a maximum number of $30$ components is allowed, which is guaranteed by additionally pruning supernumerous components of lowest weights. Finally, the \ac{OSPAT} \cite{ospat} is used to evaluate the performance, where the parameters are also modified in order to highlight the improvements in the cardinality error as well as the localization error. Here, the \ac{OSPAT} order $p = 1$, cut-off $c = 10$ and label change penalty $\alpha = c$ have been chosen.

Fig. \ref{fig:multihyp_vs_maxhyp_ospat} shows a comparison of the \ac{OSPAT} errors of the proposed \ac{MH} method and the 2019 \ac{MAX} method, where it can clearly be seen that the overall performance of \ac{MH} is better. Even in bad situations with heavy measurement noise, where \ac{MAX} mostly fails, \ac{MH} yields good estimation results, yet as good as estimation results of the $\sigma = 0.5$ scenario of \ac{MAX}. More detailed analysis points out that both the localization and the cardinality error are improved, where the number of non-continuous trajectories is reduced significantly by $30\,\%$ to $50\,\%$. Thus, it is of great interest to compare the proposed method with the optimal estimation, where the \ac{LMB} filter exactly knows the measured object reference point, referred to as \acf{MEAS}. Such a comparison can be seen in Fig. \ref{fig:multihyp_vs_measured_ospat2}, where the \ac{MEAS} method outperforms \ac{MH}, as expected. It is even in the worst case superior to \ac{MH} in the most advantageous scenario with $\sigma = 0.5\,\text{m}$. The reasons are manifold but two main aspects are important. First, the number of hypotheses is strictly bounded and therefore, \ac{MH} can not exploit its full power, and second, \ac{MH} always extracts the most probable component of the estimated \ac{LMB} density, which is often an \textit{incorrect} one. More intelligent extraction rules, thus, may improve the proposed method further.

Finally, a possible performance difference between \ac{MH} and the \ac{MHC} is examined in Fig. \ref{fig:multihyp_vs_multihyp_ccheck_ospat2}. As can be seen the constraint check does not degrade performance, it rather leads to a performance increase in some cases, especially when noise is severe. This happens since the most likely association often leads to illegal states, which are rejected, while the \textit{correct} components obtain increased weights and therefore dominate in the extraction, which increases the overall performance.

Due to the specific design of the simulated scenario, with a simple and non-ambiguous situation in the beginning that turns into a very complex one, it can be concluded that the presented method counteracts the mentioned ambiguity of the track to measurement association, whereas it does never decrease precision (which has also been tested in different scenarios). This is achieved only by the cost of computational effort, which increases around $77.9\,\%$ and $102.2\,\%$ between the \ac{MAX} method and the \ac{MH} method. These numbers have been measured on a standard desktop PC and can therefore only roughly indicate the dimension of the increase. The increase of the computational effort correlates to the increase of the measurement noise, which is simply due to the fact that the number of ambiguous situations increases and, with that, the number of relevant mixture components in the update.
Moreover, the model based thinning of \ac{MHC} showed to reduce the computational costs by about $10\,\%$ compared to the \ac{MH} method. Further reductions of the computational effort could be achieved by a parallelization of the innovations and the group updates. Additionally, the approach can be used in combination with Gibbs sampling \cite{MultipleHypothesis2018} for a further reduction. Finally, a dynamic switching between \ac{MAX} and \ac{MHC} based on statistical characteristics (e.g. similar to \cite{ALMB16}) could be implemented.

\subsection{Real Data}
Real data from a test site in Ulm-Lehr \cite{mecview} has been used to proof the applicability of the proposed method to real world scenarios. A complex scenario with multiple vehicles has been chosen, where ground truth information of one test vehicle was available, shown in green in Fig. \ref{fig:real_data_scenario}. The reference vehicle leads a group of three consecutive vehicles and turns left at the intersection, as the third vehicle does. Seven sensors have been distributed around the test site of which two are lidars (A,B) with 16 static beams and five are monocular cameras (C-G) \cite{digital_mirror}, whose positions are also marked in Fig. \ref{fig:real_data_scenario}. All sensors measure the position of the objects in an unknown object reference point and in order to fulfill the requirements, the cameras measure length and width of the vehicles, too.
\begin{figure}
  \vspace{4pt}
  \subfloat{
    \setlength\figW{0.85\columnwidth}
    \input{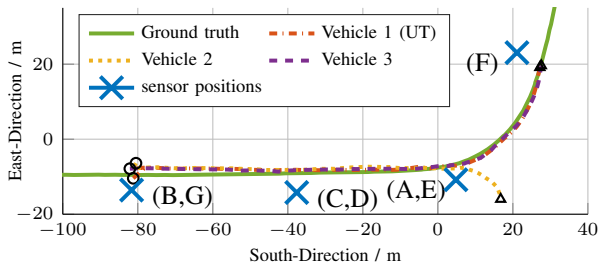}}
  \caption{Simulation scenario with three vehicles observed by seven distributed sensors.}
  \label{fig:real_data_scenario}
  \hfill
  \vspace*{-15pt}
\end{figure}%

As can be seen in Fig. \ref{fig:real_data_scenario}, the filter can follow all vehicles as long as those are within the system's range, and the cardinality estimation is correct, too. The average location error between the ground truth track (green) and its estimation (vehicle 1, red) is $\bar{\epsilon}_d = 1.13\,m$ with a maximum of $\epsilon_{d,\text{max}} = 4.01\,m$, which occurs in the track initialization due to the unknown reference point $\zeta$. Again, the track initialization plays a decisive role and superposes the location error. The average error is in the range of the sensor error, which has been evaluated from test vehicle's ground truth data.

\addtolength{\textheight}{-9cm}   


\section{CONCLUSIONS} \label{conclusions}
Motivated by performance issues of the \ac{LMB} filter of \cite{env_model_rp} in the case of unknown association of measurements and object reference points, an improvement was developed and proposed in this paper. It could be shown that the incorporation of multiple concurrent object reference point hypotheses solves those issues, however, at the cost of increased computational burden. Those, however, have been addressed in a model based thinning that showed the ability to again reduce the increased computational costs without performance degradation. The proposed method has successfully been applied to a real urban and complex scenario with multiple vehicles and sensors, showing good results and improving the underlying measurement model significantly.






\bibliographystyle{IEEEtran}
\bibliography{IEEEabrv,bibliography}

\end{document}